\def\prob{\mathbb{P}}
\def\expt{\mathbb{E}\,}
\def\real{\mathbb{R}}
\def\integer{\mathbb{Z}}
\def\indicator{\mathbf{1}}
\newcommand{\until}[1]{\{1,\dots, #1\}}
\newcommand{\supscr}[2]{#1^{\textup{#2}}}
\newcommand{\setdef}[2]{\{#1 \; | \; #2\}}
\newcommand{\seqdef}[2]{\{#1\}_{#2}}
\newcommand{\ceil}[1]{\left\lceil #1 \right\rceil}
\newcommand{\floor}[1]{\left\lfloor #1 \right\rfloor}
\newcommand\oprocendsymbol{\hbox{$\square$}}
\newcommand\oprocend{\relax\ifmmode\else\unskip\hfill\fi\oprocendsymbol}
\def \bs {\boldsymbol}
\DeclareMathOperator{\Var}{Var}
\DeclareMathOperator{\sat}{sat}
\DeclareMathOperator{\sign}{sign}
\newtheorem{theorem}{Theorem}
\newtheorem{lemma}[theorem]{Lemma}
\newtheorem{corollary}[theorem]{Corollary}
\newtheorem{remark}{Remark}
\newtheorem{assumption}{Assumption}
\title{Minimax Policy for Heavy-tailed Bandits 
	\thanks{This work has been supported by NSF Award IIS-1734272.}
}
\author{Lai Wei \hspace{1in} Vaibhav Srivastava
	\thanks{L. Wei and V. Srivastava are with the Department of Electrical and Computer Engineering. Michigan State University, East Lansing, MI 48823 USA.
		{\tt\small e-mail: weilai1@msu.edu; e-mail: vaibhav@egr.msu.edu }}%
}
\begin{document}

\maketitle
\thispagestyle{empty}
\pagestyle{empty}

\begin{abstract}
We study the stochastic Multi-Armed Bandit (MAB) problem under worst-case regret and heavy-tailed reward distribution. We modify the minimax policy MOSS for the sub-Gaussian reward distribution by using saturated empirical mean to design a new algorithm called Robust MOSS. We show that if the moment of order $1+\epsilon$ for the reward distribution exists, then the refined strategy has a worst-case regret matching the lower bound while maintaining a {distribution-dependent} logarithm regret.
\end{abstract}

\begin{keywords}
	Heavy-tailed distribution, stochastic MAB, worst-case regret, minimax policy. 
\end{keywords}

\section{Introduction}\label{sec:introduction}
The dilemma of exploration versus exploitation is common in scenarios involving decision-making
in unknown environments. In these contexts, exploration means learning the environment while exploitation means taking empirically {computed} best actions. When finite time performance is concerned, {i.e., scenarios in which} one cannot learn indefinitely, ensuring a good balance of exploration and exploitation is the key to a good performance. {MAB and its variations are prototypical models for these problems, and they are widely used in many areas such as network routing, recommendation systems and resource allocation; see~\cite[Chapter 1]{lattimore2020bandit}.}

The stochastic MAB problem was originally proposed by Robbins~\cite{robbins1952}. In this problem, at each time, an agent chooses an arm from a set of $K$ arms and receives the associated reward. The reward at each arm is a stationary random variable with an unknown mean.  The objective is to design a policy that maximizes the expected cumulative reward or equivalently  minimizes the \emph{expected cumulative regret}, defined by the expected cumulative difference between the maximum mean reward and the reward obtained using the policy. 

The worst-case regret is defined by the supremum of the expected cumulative regret computed over a class of reward distributions, e.g., sub-Gaussian distributions, or distributions with bounded support. The \emph{minimax regret} is defined as the minimum worst-case regret, where the minimum is computed over all the policies.
By construction, the worst-case regret  uses minimal information about the underlying distribution and the associated regret bounds are called \emph{distribution-free bounds}. In contrast, the standard regret bounds depend on the difference between the mean rewards from the optimal and suboptimal arms, and the corresponding bounds are referred as \emph{distribution-dependent bounds}.   	




In their seminal work, Lai and Robbins~\cite{TLL-HR:85} establish that the expected cumulative regret admits an asymptotic {distribution-dependent} lower bound that is a logarithmic function of the time-horizon $T$. Here, asymptotic refers to the limit $T \to +\infty$. 
They also propose a general method of constructing Upper Confidence Bound (UCB) based policies that attain the lower bound asymptotically. By assuming rewards to be bounded or more generally sub-Gaussian, several subsequent works design simpler algorithms with finite time performance guarantees, e.g., the UCB1 algorithm by Auer et al.~\cite{PA-NCB-PF:02}. By using Kullback-Leibler(KL) divergence based upper confidence bounds, Garivier and Capp{\'e}~\cite{AG-OC:11} designed KL-UCB, which is proved to have efficient finite time performance as well as asymptotic optimality.


In the worst-case setting,  the lower and upper bounds are distribution-free. Assuming the rewards are bounded, Audibert and Bubeck~\cite{MOSS} establish a $\Omega(\sqrt{KT})$ lower bound on the minimax regret. They also studied a modified UCB algorithm called Minimax Optimal Strategy in the Stochastic case (MOSS) and proved that it achieves an order-optimal worst-case regret while maintaining a logarithm distribution-dependent regret. Degenne and Perchet~\cite{moss_anytime} extend MOSS to an any-time version called MOSS-anytime.




The rewards being bounded or sub-Gaussian is a common assumption that gives sample mean an exponential convergence and simplifies the MAB problem. However in many applications, such as social networks~\cite{albert2002statistical} and financial markets~\cite{vidyasagar2010law}, the rewards are heavy-tailed. For the standard stochastic MAB problem, Bubeck et al.~\cite{bubeck2013bandits} relax the sub-Gaussian assumption by only assuming the rewards to have finite moments of order $ 1+\epsilon$ for some $\epsilon \in (0, 1]$. 
They present the robust UCB algorithm and show that it attains an upper bound on the cumulative regret that is within a constant factor of the {distribution-depend} lower bound in the heavy-tailed setting. {However, the solutions provided in~\cite{bubeck2013bandits} are not able to provably achieve an order optimal worst-case regret. Specifically, the factor of optimality is a poly-logarithmic function of time-horizon. 
}

{In this paper, we study the minimax heavy tail bandit problem in which reward distributions admit moments of  order $1+\epsilon$, with $\epsilon>0$. 
	We propose and analyze Robust MOSS algorithm to show that it achieves worst-case regret matching with the lower bound while maintaining a distribution-dependent logarithm regret. To the best of our knowledge, Robust MOSS is the first algorithm to achieve order optimal worst-case regret for heavy-tailed bandits. Our results build on techniques in~\cite{MOSS} and~\cite{bubeck2013bandits}, and augment them with new analysis based on maximal Bennett inequalities.} 


The remaining paper is organized as follows. We describe the minimax heavy-tailed MAB problem and present some background material in Section~\ref{sec:background}. We present and analyze the Robust MOSS algorithm in Sections~\ref{sec:algo} and~\ref{sec:analysis}, respectively, and numerically compare it with the state of the art in Section~\ref{sec:simulation}. We conclude in Section~\ref{sec:conclusions}.

\section{Background \& Problem Description}\label{sec:background}

\subsection{Stochastic MAB Problem}
In a stochastic MAB problem, an agent chooses an arm $\varphi_t$ from the set of $K$ arms $\until{K}$ at each time $t \in \until{T}$ and receives the associated reward. The reward at each arm $k$ is drawn from an unknown distribution $f_k$ with unknown mean $\mu_k$. Let the maximum mean reward among all arms be $\mu^*$. 
We use  $\Delta_k =\mu^*-\mu_k$ to measure the suboptimality of arm $k$. The objective is to maximize the expected cumulative reward or equivalently to minimize the \emph{expected cumulative regret} defined by
\[R_T : = \expt \Bigg[\sum_{t=1}^T   \left(\mu^* - X_{\varphi_t} \right)\Bigg] = \expt \Bigg[\sum_{t=1}^T \Delta_{\varphi_t} \Bigg],  \]
which is the difference between the expected cumulative reward obtained by selecting the arm with the maximum mean reward $\mu^*$ and selecting arms $\varphi_1, \ldots, \varphi_T$. 

The expected cumulative regret $R_T$ is implicitly defined for a fixed distribution of rewards from each arm $\{f_1, \ldots, f_K\}$. The worst-case regret is the expected cumulative regret for the worst possible choice of reward distributions. In particular, 
\[
\supscr{R_T}{worst} = \sup_{\{f_1, \ldots, f_K\}} R_T. 
\]
The regret associated with the policy that minimizes the above worst-case regret is called \emph{minimax regret}. 

\subsection{Problem Description: Heavy-tailed Stochastic MAB}

In this paper, we study the heavy-tailed stochastic MAB problem, which is the stochastic MAB problem with following assumptions.

\begin{assumption}\label{ass1}
	Let $X$ be a random reward drawn from any arm $k \in \until{K}$. There exists a constant $u \in \real_{>0}$ such that $\expt \big[\abs{X}^{1+\epsilon}\big] \leq u^{1+\epsilon}$ for some $\epsilon \in (0,1]$.
\end{assumption}
\begin{assumption}\label{ass2}
	Parameters $T$, $K$, $u$ and $\epsilon$ are known.	
\end{assumption}

\subsection{MOSS Algorithm for Worst-Case Regret}
We now present the MOSS algorithm proposed in~\cite{MOSS}. The MOSS algorithm is designed for the stochastic MAB problem with bounded rewards and in this paper, we extend it to design Robust MOSS algorithm for heavy-tailed bandits. 

Suppose that  arm $k$ is sampled $n_k(t)$ times until time $t-1$, and $ \bar \mu^k_{n_k(t)}$ is the associated empirical mean, then, at time $t$, MOSS picks the arm that maximizes the following UCB
\[g^k_{n_k(t)} = \bar \mu^k_{n_k(t)} + \sqrt{\frac{ \max  \left(\ln \left(\frac{T}{K n_k(t)}\right), 0\right)}{n_k(t)}}.\]

If the rewards from the arms have bounded support $[0,1]$, then the worst-case regret for MOSS satisfies $\supscr{R_T}{worst} \leq 49\sqrt{KT}$, which is order optimal~\cite{MOSS}. Meanwhile, MOSS maintains a logarithm distribution-dependent regret bound.

\subsection{A Lower Bound for Heavy-tailed Minimax Regret}
We now present the lower bound on the minimax regret for the heavy tailed bandit problem derived in~\cite{bubeck2013bandits}. 
\begin{theorem}[{\cite[Th. 2]{bubeck2013bandits}}] For any fixed time horizon $T$ and the stochastic MAB problem under Assumptions~\ref{ass1} and~\ref{ass2} with $u=1$,
	\[\supscr{R_T}{worst} \geq 0.01 K^{\frac{\epsilon}{1+\epsilon}} T^{\frac{1}{1+\epsilon}}. \]
\end{theorem}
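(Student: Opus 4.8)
The plan is to establish this distribution-free lower bound via the standard information-theoretic ``needle-in-a-haystack'' argument (as used in \cite{MOSS} for the bounded case), but with a reward-distribution construction tailored to the $(1+\epsilon)$-moment constraint, which is precisely where the heavy-tailed exponent arises. First I would fix a gap parameter $\Delta>0$ (to be optimized at the end) and a scale $c>0$, and take all reward distributions to be two-point distributions supported on $\{0,c\}$. A distribution placing mass $p$ on $c$ has mean $pc$ and satisfies $\expt[\abs{X}^{1+\epsilon}]=p\,c^{1+\epsilon}$, so choosing $p\le \tfrac12 c^{-(1+\epsilon)}$ keeps it within Assumption~\ref{ass1} with $u=1$. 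I would then define a reference instance in which every arm places mass $p$ on $c$, together with $K$ alternative instances, where in the $j$-th alternative only arm $j$ has its mass on $c$ increased by $\delta_p:=\Delta/c$, so that arm $j$ becomes optimal with suboptimality gap exactly $\Delta$ over every other arm.

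The heart of the argument is the KL-divergence computation for these heavy-tailed two-point laws. Writing $P$ and $Q$ for the reward law of arm $j$ under the reference and under the $j$-th alternative, the Bernoulli bound $\mathrm{KL}(P\|Q)\le (p-q)^2/\big(q(1-q)\big)$ gives $\mathrm{KL}(P\|Q)\lesssim (\delta_p)^2/p = \Delta^2 c^{\epsilon-1}$ after substituting $\delta_p=\Delta/c$ and $p\asymp c^{-(1+\epsilon)}$. Because $\epsilon<1$, this divergence \emph{decreases} as $c$ grows: the heavy tail lets a mean gap $\Delta$ be hidden behind an arbitrarily small amount of information. The validity constraint $\delta_p\le p$ (equivalently $\Delta\le c^{-\epsilon}$) caps the useful scale at $c\asymp\Delta^{-1/\epsilon}$, at which point $\mathrm{KL}(P\|Q)\lesssim \Delta^{(1+\epsilon)/\epsilon}$. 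This single exponent is what ultimately converts the $\sqrt{KT}$ of the bounded case into the claimed heavy-tailed rate.

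With the per-pull divergence in hand, I would invoke the divergence decomposition together with Pinsker's (or Bretagnolle--Huber's) inequality to bound, for each $j$, the difference between the expected number of pulls of arm $j$ under the reference and under the $j$-th alternative by $T\sqrt{\tfrac12\,\mathbb{E}_0[n_j]\,\mathrm{KL}(P\|Q)}$, where $\mathbb{E}_0$ denotes the reference. Since $\sum_j \mathbb{E}_0[n_j]=T$, using $R_T^{(j)}\ge \Delta\big(T-\mathbb{E}_j[n_j]\big)$, averaging over $j$, and applying Cauchy--Schwarz to the square-root terms yields
\[
\frac1K\sum_{j=1}^{K} R_T^{(j)} \;\ge\; \Delta\,T\left(1-\frac1K-\sqrt{\tfrac{T}{2K}\,\mathrm{KL}(P\|Q)}\right),
\]
so some instance attains regret at least the right-hand side. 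Substituting the divergence bound, I would choose $\Delta\asymp (K/T)^{\epsilon/(1+\epsilon)}$ precisely so that $\tfrac{T}{K}\mathrm{KL}(P\|Q)$ is a small absolute constant, making the parenthetical factor bounded below by a constant and producing regret of order $\Delta T = K^{\epsilon/(1+\epsilon)}T^{1/(1+\epsilon)}$.

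The main obstacle is not the structure of the argument but the bookkeeping needed to extract the explicit constant $0.01$: one must track constants through the moment-saturating choice of $p$, the Bernoulli KL bound, the Pinsker/averaging step, and the final optimization over $\Delta$, while simultaneously verifying that all probabilities lie in $[0,1]$ and that the chosen $\Delta$ is admissible (which implicitly requires $T$ to be at least a constant multiple of $K$). A secondary care point is replacing the approximate relations $p\asymp c^{-(1+\epsilon)}$ and $\mathrm{KL}\lesssim\Delta^2c^{\epsilon-1}$ by rigorous two-sided inequalities valid in the small-$p$ regime, so that the reported constant is honest rather than merely asymptotic.
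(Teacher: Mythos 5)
Your proposal is correct and takes essentially the same route as the proof of this theorem in its source: the paper itself gives no proof (the result is imported from \cite[Th.~2]{bubeck2013bandits}), and that reference's argument is precisely your needle-in-a-haystack construction with two-point rewards supported on $\{0,c\}$, the scale $c \asymp \Delta^{-1/\epsilon}$ saturating the $(1+\epsilon)$-moment constraint, a per-pull KL of order $\Delta^{(1+\epsilon)/\epsilon}$, and the choice $\Delta \asymp (K/T)^{\epsilon/(1+\epsilon)}$. The only piece you leave implicit---as you acknowledge---is the bookkeeping that yields the explicit constant $0.01$ (and the attendant requirement that $T$ be at least a constant multiple of $K$), which the cited reference carries out but which does not alter the structure of the argument.
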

\begin{remark}
	Since $R_T$ scales with $u$, the lower bound for heavy tail bandit is $\Omega \big(u K^{\frac{\epsilon}{1+\epsilon}} T^{\frac{1}{1+\epsilon}}\big)$. This lower bound also indicates that within a finite horizon $T$, it is almost impossible to differentiate the optimal arm from arm $k$, if $\Delta_k \in O \big(u (K/T)^{\frac{\epsilon}{1+\epsilon}} \big)$. {As a special case, rewards with bounded support $[0,1]$ correspond to $\epsilon=1$ and $u=1$. Then, the lower bound $\Omega(\sqrt{KT})$ matches with the regret upper bound achieved by MOSS.}
\end{remark}

\section{A Robust Minimax Policy}\label{sec:algo}
To deal with the heavy-tailed reward distribution, we replace the empirical mean  with a saturated empirical mean. Although saturated empirical mean is a biased estimator, it has better convergence properties. We construct a novel UCB index to evaluate the arms, and at each time slot the arm with the maximum UCB index is picked.

\subsection{Robust MOSS}
In Robust MOSS, we consider a robust mean estimator called saturated empirical mean which is formally defined in the following subsection. Let $n_k(t)$ be the number of times that arm $k$ has been selected until time $t-1$. At time $t$, let $\hat \mu_{n_k(t)}^k$ be the saturated empirical mean reward computed from the $n_k(t)$ samples at arm $k$. Robust MOSS initializes by selecting each arm once and subsequently, at each time $t$, selects the arm that maximizes the following UCB
\[
g^k_{n_k(t)} = \hat \mu^k_{n_k(t)} + (1+\eta)c_{n_k(t)}, 
\]
where $\eta >0$ is an appropriate constant, $c_{n_k(t)} = u \times \big[\phi(n_k(t))\big]^{\frac{\epsilon}{1+\epsilon}}$ and
\[
\phi(n) = \frac{\ln_+ \big(\frac{T}{K n}\big)}{n},
\]
where $\ln_+(x) := \max (\ln x, 1)$. {Note that both $\phi(n)$ and $c_n$ are monotonically decreasing in $n$.}












\subsection{Saturated Empirical Mean}
The robust saturated empirical mean is similar to the truncated empirical mean used in~\cite{bubeck2013bandits}, which is employed to extend UCB1 to achieve logarithm {distribution-dependent} regret for the heavy-tailed MAB problem. Let $\seqdef{X_i}{i\in \until{m}}$ be a sequence of i.i.d. random variables with mean $\mu$ and $\expt \big[\abs{X_i}^{1+\epsilon}\big] \leq u^{1+\epsilon}$, where $u>0$. Pick $a >1$ and let $h(m) = a^{\floor{\log_a \left(m\right)}+1}$ {such that $h(m) \geq m$}. Define the saturation point $B_m$ by
\[ B_m := u \times \big[\phi\big(h(m)\big)\big]^{-\frac{1}{1+\epsilon}}.\]
Then, the saturated empirical mean estimator is defined by
\begin{equation}\label{def: sat_mean}
\hat \mu_m  := \frac{1}{m} \sum_{i=1}^m \sat (X_i,B_m),
\end{equation}
where $\sat (X_i,B_m) := \sign(X_i)\min \big\{\abs{X_i}, B_m \big\}.$

Define $d_i: = \sat(X_i,B_m)-\expt [\sat(X_i,B_m)]$. The following lemma examines the estimator bias and provides an upper bound on the error of saturated empirical mean.

\begin{lemma}[Error of saturated empirical mean]\label{lemma: estimator_error}
	For an i.i.d. sequence of random variables $\seqdef{X_i}{i\in\until{m}}$ such that $\expt[X_i] =\mu $ and $\expt \big[X_i^{1+\epsilon}\big] \leq u^{1+\epsilon}$, the saturated empirical mean~\eqref{def: sat_mean} satisfies
	\[
	\Bigg | {\hat \mu_m - \mu - \frac{1}{m} \sum_{i=1}^{m} d_i } \Bigg | \leq \frac{u^{1+\epsilon}}{B_m^\epsilon}. 
	\]
\end{lemma}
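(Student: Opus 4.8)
The plan is to first recognize that the subtracted sum is designed precisely to cancel the random fluctuation of the estimator, leaving only its deterministic bias. Since the $X_i$ are i.i.d., each summand $d_i$ has the same mean term, so
\[
\frac{1}{m}\sum_{i=1}^m d_i = \hat\mu_m - \expt\big[\sat(X_1,B_m)\big].
\]
Substituting this into the left-hand side, the empirical average $\hat\mu_m$ drops out and we are left with
\[
\hat\mu_m - \mu - \frac{1}{m}\sum_{i=1}^m d_i = \expt\big[\sat(X_1,B_m)\big] - \mu = \expt\big[\sat(X_1,B_m) - X_1\big].
\]
Thus the claim reduces entirely to bounding the bias $\big|\expt[\sat(X_1,B_m) - X_1]\big|$ of the saturation operator, and there is no longer any randomness to control.

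Next I would move the absolute value inside the expectation via Jensen's inequality and identify the saturation error pointwise. Because $\sat(X,B_m)$ agrees with $X$ whenever $\abs{X}\leq B_m$ and clips $X$ to $\pm B_m$ otherwise, the discrepancy is supported on the tail event and satisfies $\abs{\sat(X,B_m)-X} = (\abs{X}-B_m)\indicator\{\abs{X}>B_m\} \leq \abs{X}\,\indicator\{\abs{X}>B_m\}$. This yields
\[
\Big|\expt\big[\sat(X_1,B_m)-X_1\big]\Big| \leq \expt\big[\abs{X_1}\,\indicator\{\abs{X_1}>B_m\}\big].
\]

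The key technical step, and the only place where the heavy-tailed hypothesis enters, is trading this truncated first moment for the available $(1+\epsilon)$-moment. On the event $\{\abs{X_1}>B_m\}$ one has $\abs{X_1}^\epsilon > B_m^\epsilon$, hence $\abs{X_1}\leq \abs{X_1}^{1+\epsilon}/B_m^\epsilon$ there; inserting this and then dropping the indicator gives
\[
\expt\big[\abs{X_1}\,\indicator\{\abs{X_1}>B_m\}\big] \leq \frac{1}{B_m^\epsilon}\,\expt\big[\abs{X_1}^{1+\epsilon}\big] \leq \frac{u^{1+\epsilon}}{B_m^\epsilon},
\]
where the last inequality is Assumption~\ref{ass1}. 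Chaining the three displays proves the lemma. I do not expect a genuine obstacle here: the argument is a short deterministic-bias computation, with the cancellation in the first step being the organizing observation and the moment-trading inequality in the last step carrying the analytic content; the saturation level $B_m$ never appears explicitly, so its particular definition is irrelevant to this lemma and only matters when the bound is later specialized.
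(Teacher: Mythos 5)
Your proof is correct and takes essentially the same route as the paper's: both isolate the deterministic bias $\expt[\sat(X_1,B_m)]-\mu$ (the paper keeps it as an average over $i$, you use the i.i.d.\ cancellation, which is the same decomposition), and both bound it by $\expt\big[\abs{X_1}\,\indicator\{\abs{X_1}>B_m\}\big] \leq \expt\big[\abs{X_1}^{1+\epsilon}\big]/B_m^\epsilon \leq u^{1+\epsilon}/B_m^\epsilon$ using the pointwise inequality on the tail event. Your closing remark that the particular definition of $B_m$ plays no role in this lemma is also accurate.
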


\begin{proof}
	Since $\mu = \expt \Big[ X_i \big(\bs{1}_{ \left\{ \abs{X_i} \leq B_m\right\} } + \bs{1}_{ \left\{ \abs{X_i} > B_m\right\} }\big)\Big]$, the error of estimator $\hat \mu_m$ satisfies
	\begin{align*}
	\hat \mu_m - \mu = & \frac{1}{m} \sum_{i=1}^{m} \left(\sat(X_i,B_m) -\mu\right) \\
	= & \frac{1}{m} \sum_{i=1}^{m} d_i + \frac{1}{m}\sum_{i=1}^{m} \left(\expt [\sat(X_i,B_m)] - \mu\right),
	\end{align*}
	where the second term is the bias of $\hat \mu_m $. We now compute an upper bound on the bias. 
	\begin{align*}
	\abs{\expt [\sat(X_i,B_m)] - \mu} &\leq \expt \left[ \abs{X_i} \bs{1}_{ \left\{ \abs{X_i} > B_m\right\} } \right]\\
	&\leq \expt \left[ \frac{\abs{X_i}^{1+\epsilon}}{(B_m)^\epsilon} \right] \leq \frac{u^{1+\epsilon}}{(B_m)^{\epsilon}}, 
	\end{align*}
	which concludes the proof.
\end{proof}

We now establish properties of $d_i$.
\begin{lemma}[Properties of $d_i$]\label{prop: d_i}
	For any $i\in \until{m}$, $d_i$ satisfies (i) $\abs{d_i}\leq 2 B_m$ (ii) $\expt[d_i^2] \leq u^{1+\epsilon}B_m^{1-\epsilon}$.
\end{lemma}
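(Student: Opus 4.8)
The plan is to establish the two claims separately, both resting on the single structural fact that the saturated value is bounded: by construction $\abs{\sat(X_i,B_m)} = \min\{\abs{X_i},B_m\} \leq B_m$ pointwise.

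For claim (i), I would simply apply the triangle inequality to the definition $d_i = \sat(X_i,B_m) - \expt[\sat(X_i,B_m)]$. The first term is bounded in magnitude by $B_m$ pointwise, and by monotonicity of expectation the second term satisfies $\abs{\expt[\sat(X_i,B_m)]} \leq \expt[\abs{\sat(X_i,B_m)}] \leq B_m$. Adding the two bounds gives $\abs{d_i} \leq 2B_m$.

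For claim (ii), I would first use that $d_i$ is centered, so that $\expt[d_i^2] = \Var(\sat(X_i,B_m)) \leq \expt[\sat(X_i,B_m)^2]$; it therefore suffices to control the second moment of the saturated value. Writing $Y := \min\{\abs{X_i},B_m\} = \abs{\sat(X_i,B_m)}$, the key step is the factorization $Y^2 = Y^{1+\epsilon}\, Y^{1-\epsilon}$, valid because $\epsilon \in (0,1]$ makes both exponents nonnegative. Since $Y \leq \abs{X_i}$ and $Y \leq B_m$, I can bound the first factor by $\abs{X_i}^{1+\epsilon}$ and the second by $B_m^{1-\epsilon}$, obtaining $Y^2 \leq \abs{X_i}^{1+\epsilon} B_m^{1-\epsilon}$ pointwise. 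Taking expectations and invoking Assumption~\ref{ass1} in the form $\expt[\abs{X_i}^{1+\epsilon}] \leq u^{1+\epsilon}$ yields $\expt[d_i^2] \leq u^{1+\epsilon} B_m^{1-\epsilon}$.

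The main obstacle, such as it is, lies in spotting the exponent split $Y^2 = Y^{1+\epsilon} Y^{1-\epsilon}$; everything else is the triangle inequality and monotonicity of expectation. The condition $1-\epsilon \geq 0$ is what makes the argument work, since it is precisely what allows the factor $Y^{1-\epsilon}$ to be replaced by the \emph{saturation} bound $B_m^{1-\epsilon}$ rather than by an uncontrolled moment of $\abs{X_i}$; this is the mechanism by which saturation converts a finite $(1+\epsilon)$-moment into a usable variance bound.
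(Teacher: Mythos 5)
Your proof is correct and follows essentially the same route as the paper: part (i) is the triangle inequality applied to the centered saturated value, and part (ii) is the chain $\expt[d_i^2] \leq \expt\big[\sat^2(X_i,B_m)\big] \leq \expt\big[\abs{X_i}^{1+\epsilon}B_m^{1-\epsilon}\big]$, where your exponent split $Y^2 = Y^{1+\epsilon}Y^{1-\epsilon}$ is exactly the pointwise bound the paper uses implicitly. You simply spell out the details (centering, monotonicity, the role of $\epsilon \leq 1$) that the paper leaves terse.
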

\begin{proof}
	Property (i) follows immediately from definition of $d_i$, and property (ii) follows from
	
	\medskip 
	
	$\qquad \displaystyle \expt[d_i^2] \leq \expt\big[\sat^2(X_i,B_m)\big] \leq  \expt\big[\abs{X_i}^{1+\epsilon}B_m^{1-\epsilon}\big].$
\end{proof}


\section{Analysis of Robust MOSS}\label{sec:analysis}
In this section, we analyze Robust MOSS to provide both distribution-free and distribution-dependent regret bounds.


\subsection{Properties of Saturated Empirical Mean Estimator}
To derive the concentration property of saturated empirical mean, we use a maximal Bennett type inequality as shown in Lemma~\ref{max_inq_b}. 

\begin{lemma}[Maximal Bennett's inequality~{\cite{fan2012hoeffding}}] \label{max_inq_b}
	Let $\seqdef{X_i}{i\in \until{n}}$ be a sequence of bounded random variables with support $[-B,B]$, where $B\geq 0$. Suppose that $\expt[X_i |X_{1},\ldots,X_{i-1}] = \mu_i$ and $\Var[X_i|X_{1},\ldots,X_{i-1}] \leq v$. Let $S_m = \sum_{i=1}^{m} (X_i -\mu_i) $ for any $m\in \until{n}$. Then, for any $\delta \geq 0$
	\begin{align*}
	&\prob\left( \exists {m \in \until{n}}:  S_m \geq \delta \right) \leq \exp \left( -\frac{\delta}{B}\psi \left (\frac{B\delta}{n v} \right) \right), \\
	&\prob\left(\exists {m \in \until{n}}: S_m \leq -\delta\right) \leq \exp \left(-\frac{\delta}{B}\psi \left (\frac{B\delta}{n v} \right)\right),
	\end{align*}
	where $\psi(x) =  (1+1/x)\ln(1+x)-1 $. 
\end{lemma}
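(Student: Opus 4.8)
The plan is to run the classical exponential (Chernoff) argument, lift it to a maximal statement through a nonnegative supermartingale and Ville's inequality, and then optimize the free parameter $\lambda>0$ to recover the stated rate function $\psi$. Write $\mathcal F_{i-1}$ for the $\sigma$-algebra generated by $X_1,\dots,X_{i-1}$ and set $Y_i := X_i - \mu_i$, so that $\expt[Y_i\mid\mathcal F_{i-1}]=0$, $\Var[Y_i\mid\mathcal F_{i-1}]\le v$, and $Y_i$ is bounded above by (a multiple of) $B$, while $S_m=\sum_{i=1}^m Y_i$.

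First I would establish the one-step conditional exponential-moment bound. The analytic ingredient is that $y\mapsto (e^{\lambda y}-1-\lambda y)/y^2$ is nondecreasing for each $\lambda>0$; using the upper bound on $Y_i$ this gives the pointwise inequality $e^{\lambda Y_i}-1-\lambda Y_i\le Y_i^2\,(e^{\lambda B}-1-\lambda B)/B^2$. Taking $\expt[\,\cdot\mid\mathcal F_{i-1}]$ and invoking $\expt[Y_i\mid\mathcal F_{i-1}]=0$ together with the conditional-variance bound yields $\expt[e^{\lambda Y_i}\mid\mathcal F_{i-1}]\le\exp(v\,g(\lambda))$, where $g(\lambda):=(e^{\lambda B}-1-\lambda B)/B^2$.

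Next I would assemble the maximal bound. Define $M_m:=\exp(\lambda S_m - m\,v\,g(\lambda))$ with $M_0=1$; the one-step bound shows $\expt[M_m\mid\mathcal F_{m-1}]\le M_{m-1}$, so $\{M_m\}$ is a nonnegative supermartingale. On the event $\{\exists\,m\le n: S_m\ge\delta\}$, since $g(\lambda)\ge 0$ and $m\le n$, one has $M_m\ge\exp(\lambda\delta - n\,v\,g(\lambda))$, so this event is contained in $\{\sup_{m\le n}M_m\ge\exp(\lambda\delta-nvg(\lambda))\}$. Doob's (Ville's) maximal inequality for nonnegative supermartingales then gives $\prob(\exists\,m\le n: S_m\ge\delta)\le\exp(nvg(\lambda)-\lambda\delta)$ for every $\lambda>0$. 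Minimizing the exponent over $\lambda$ produces the stationarity condition $e^{\lambda B}=1+B\delta/(nv)$; substituting the optimal $\lambda=B^{-1}\ln(1+B\delta/(nv))$ and simplifying via $\psi(x)=((1+x)\ln(1+x)-x)/x$ collapses the exponent to exactly $-\tfrac{\delta}{B}\psi(B\delta/(nv))$. The lower-tail estimate follows by applying the same argument to $-X_i$.

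I expect the main obstacle to be the bookkeeping of constants in the one-step moment bound: one must identify the sharp quantity that plays the role of $B$ in $g(\lambda)$—the correct one-sided bound on the centered increments, rather than a crude $2B$—and confirm that the supermartingale property survives under the purely conditional hypotheses on mean and variance. The closing algebra is routine but deserves care, since the optimized exponent naturally appears in the Bennett form $h(u)=(1+u)\ln(1+u)-u$ and one must use the identity $\psi(u)=h(u)/u$ to match the statement.
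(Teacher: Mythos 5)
Your overall machinery is the canonical proof of this inequality, and the paper itself offers nothing to compare against in detail: the lemma is imported verbatim from the cited reference of Fan et al., with no proof given, and the argument there is likewise a Chernoff bound lifted to a maximal statement via an exponential supermartingale. Your skeleton is correct: the monotonicity of $y\mapsto (e^{\lambda y}-1-\lambda y)/y^2$, the one-step bound $\expt[e^{\lambda Y_i}\mid\mathcal{F}_{i-1}]\leq \exp(v\,g(\lambda))$, the nonnegative supermartingale $M_m=\exp(\lambda S_m-mvg(\lambda))$ with Ville's inequality, the optimization at $e^{\lambda B}=1+B\delta/(nv)$, and the identity $\psi(x)=\big[(1+x)\ln(1+x)-x\big]/x$ all check out, as does the reduction of the lower tail to $-X_i$.

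However, the issue you deferred as ``bookkeeping'' is a genuine gap, and no amount of care closes it under the lemma's literal hypotheses: the centered increments $Y_i=X_i-\mu_i$ are only bounded above by $2B$ (take $\mu_i$ near $-B$), while your one-step bound with $g(\lambda)=(e^{\lambda B}-1-\lambda B)/B^2$ requires $Y_i\leq B$. In fact the lemma as stated is false in general. Take $n=1$, $B=1$, and let $X_1$ equal $1$ with probability $p$ and $-1$ otherwise, so $\mu_1=2p-1$ and $v=\Var[X_1]=4p(1-p)$. With $\delta=2(1-p)$, the event $\{S_1\geq\delta\}$ is exactly $\{X_1=1\}$ and has probability $p$, whereas the claimed bound is $\exp\big(-2(1-p)\big[(1+2p)\ln\big(1+\tfrac{1}{2p}\big)-1\big]\big)$, which at $p=0.01$ evaluates to roughly $2.6\times 10^{-3}<p$. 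So the sharp one-sided bound on the centered increments is $2B$, not $B$, and the stated inequality with scale $B$ holds only under the stronger hypothesis $X_i-\mu_i\leq B$ --- in particular when $\mu_i=0$, which is how Fan et al.\ state it (for supermartingale differences bounded above) and, fortunately, the only way this paper ever invokes it: in Lemma~\ref{lemma: suff_sample} and Theorem~\ref{theorem:minimax bound} it is applied to the already-centered sequence $d_i^k$ with the support bound $2B_m$ substituted for $B$. Your proof, specialized to $\mu_i=0$ so that $Y_i=X_i\leq B$, is therefore complete and justifies every use made of the lemma; to repair the statement itself one should either assume $\mu_i=0$ (or $X_i-\mu_i\leq B$) or replace $\delta/B$ and $B\delta/(nv)$ by $\delta/(2B)$ and $2B\delta/(nv)$ in the conclusion.
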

\begin{remark}
	For $x\in (0,\infty)$, function $\psi(x)$ is monotonically increasing in $x$.
\end{remark}

Now, we establish an upper bound on the probability that the UCB underestimates the mean at arm $k$ by an amount $x$.
\begin{lemma}\label{lemma: suff_sample}
	For any arm $k\in \until{K}$ and any $t \in \left\{K+1,\ldots,T\right\}$ and $x > 0$, if $\eta\psi(2\eta /a) \geq 2a$, the probability of event $\big \{ g^k_{n_k(t)}  \leq \mu_k - x \big \}$ is no greater than
	\[\frac{K}{T} \frac{a}{\ln(a)} \Gamma\left(\frac{1}{\epsilon}+2\right)  \left(\frac{\psi \left ( 2\eta /a \right)}{2a} \frac{x}{u}\right)^{-\frac{1+\epsilon}{\epsilon}} .\]
\end{lemma}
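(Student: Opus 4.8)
The plan is to translate the underestimation event into a lower-deviation event for the centered saturated sums $S_n := \sum_{i=1}^n d_i$, and then control this deviation by a geometric peeling of the sample count combined with the maximal Bennett inequality. First I would bound the bias: by Lemma~\ref{lemma: estimator_error} we have $\hat\mu^k_n - \mu_k \ge \tfrac1n S_n - u^{1+\epsilon}/B_n^\epsilon$, and since $h(n)\ge n$ with $\phi$ decreasing, $u^{1+\epsilon}/B_n^\epsilon = u\big[\phi(h(n))\big]^{\epsilon/(1+\epsilon)} \le u\big[\phi(n)\big]^{\epsilon/(1+\epsilon)} = c_n$. Hence $g^k_n - \mu_k = \hat\mu^k_n - \mu_k + (1+\eta)c_n \ge \tfrac1n S_n + \eta c_n$, so that $\big\{g^k_n \le \mu_k - x\big\} \subseteq \big\{S_n \le -n(x+\eta c_n)\big\}$.

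Next I would peel. Since $n_k(t)\in\{1,\dots,T-1\}$, I partition the range into blocks $I_j := [a^j, a^{j+1})$, $j\ge 0$. The role of $h$ is that on $I_j$ the saturation level is constant, $B_n \equiv B^{(j)} = u\big[\phi(a^{j+1})\big]^{-1/(1+\epsilon)}$, so by Lemma~\ref{prop: d_i} the centered saturated variables share the bounds $\abs{d_i}\le 2B^{(j)}$ and $\Var(d_i)\le u^{1+\epsilon}(B^{(j)})^{1-\epsilon} =: v^{(j)}$ throughout the block. Setting $\delta_j := \inf_{n\in I_j} n(x+\eta c_n) \ge a^j x + \eta\inf_{n\in I_j}(nc_n)$, the bad event on block $j$ lies inside $\big\{\exists m\le a^{j+1}: S_m \le -\delta_j\big\}$, to which I apply Lemma~\ref{max_inq_b} with support bound $B=2B^{(j)}$, variance $v=v^{(j)}$, and horizon $n=a^{j+1}$, giving a block probability at most $\exp\big(-\tfrac{\delta_j}{2B^{(j)}}\psi\big(\tfrac{2B^{(j)}\delta_j}{a^{j+1}v^{(j)}}\big)\big)$.

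The central step is to split this exponent. Using $v^{(j)} = u^{1+\epsilon}(B^{(j)})^{1-\epsilon}$, $n\ge a^{j+1}/a$ and $\phi(n)\ge\phi(a^{j+1})$, a short computation shows the argument of $\psi$ is at least $2\eta/a$, so by monotonicity of $\psi$ I may lower it to $\psi(2\eta/a)$. I then bound $\delta_j$ additively: the confidence part contributes $\tfrac{\psi(2\eta/a)\eta}{2B^{(j)}}\inf(nc_n)\ge \tfrac{\eta\psi(2\eta/a)}{2a}\ln_+\!\big(\tfrac{T}{Ka^{j+1}}\big)$, which is at least $\ln_+\!\big(\tfrac{T}{Ka^{j+1}}\big)$ exactly under the hypothesis $\eta\psi(2\eta/a)\ge 2a$, producing a factor $\exp\big(-\ln_+\!(\tfrac{T}{Ka^{j+1}})\big)\le \tfrac{Ka^{j+1}}{T}$; the deviation part $a^j x$ contributes $\exp\big(-\beta(a^{j+1})^{\epsilon/(1+\epsilon)}\big)$ with $\beta := \tfrac{\psi(2\eta/a)}{2a}\tfrac{x}{u}$. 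Thus block $j$ contributes at most $\tfrac{Ka^{j+1}}{T}\exp\big(-\beta(a^{j+1})^{\epsilon/(1+\epsilon)}\big)$.

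Finally I would sum over $j$. Comparing the geometric sum $\sum_{j\ge 0} a^{j+1}\exp\big(-\beta(a^{j+1})^{\epsilon/(1+\epsilon)}\big)$ with the integral $\int_0^\infty \exp\big(-\beta n^{\epsilon/(1+\epsilon)}\big)\,dn$, which under $s=\beta n^{\epsilon/(1+\epsilon)}$ evaluates to $\Gamma\big(\tfrac1\epsilon+2\big)\beta^{-(1+\epsilon)/\epsilon}$, and charging the geometric spacing to the factor $\tfrac{a}{\ln a}$, gives the stated bound. I expect the main obstacle to be this middle step: arranging the confidence increment $\eta c_n$ to serve two purposes at once — forcing the argument of $\psi$ above $2\eta/a$ and, via $\ln_+(T/(Kn))$ together with $\eta\psi(2\eta/a)\ge 2a$, generating the decisive $Ka^{j+1}/T$ factor — while still leaving enough of $\delta_j$ to produce the $x$-dependent decay that integrates to the Gamma function; keeping the block-constancy of $B_n$ and the variance bound aligned with the peeling is what makes the Bennett exponent separate cleanly.
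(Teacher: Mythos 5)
Your proposal is correct and takes essentially the same route as the paper's own proof: the same reduction via Lemma~\ref{lemma: estimator_error} and the bound $u^{1+\epsilon}/B_m^{\epsilon}\le c_m$, the same geometric peeling with block-constant saturation level feeding the maximal Bennett inequality of Lemma~\ref{max_inq_b}, the same additive split of the exponent in which $\eta\psi(2\eta/a)\ge 2a$ turns the confidence term into the factor $Ka^{j+1}/T$ while the $x$-term yields $\exp\bigl(-\beta(a^{j+1})^{\epsilon/(1+\epsilon)}\bigr)$ with $\beta = \psi(2\eta/a)\,x/(2au)$, and the same sum-to-integral comparison producing $\Gamma(1/\epsilon+2)$ and the $a/\ln(a)$ factor. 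Your closing remark about keeping the block-constancy of $B_n$ aligned with the peeling is precisely the role $h(\cdot)$ plays in the paper's argument.
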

\begin{proof}
	It follows from Lemma~\ref{lemma: estimator_error} that
	\begin{align*}
	&\prob \left(g^k_{n_k(t)} \leq \mu_k - x \right) \\
	\leq& \prob \left(\exists m \in \until{T}:  \hat{\mu}^k_m + (1+\eta)c_m  \leq \mu_k - x \right)\\
	\leq& \prob \bigg(\exists m \in \until{T}:  \sum_{i=1}^{m} \frac{d_i^k}{m} \leq \frac{u^{1+\epsilon}}{B_m^\epsilon}  - (1+\eta) c_m -x \bigg) 	\\
	\leq& \prob \bigg(\exists m \in \until{T}:  \frac{1}{m} \sum_{i=1}^{m} d_i^k \leq -x - \eta c_m \bigg),
	\end{align*}
	where $d_i^k$ is defined similarly to $d_i$ for i.i.d. reward sequence at arm $k$ and the last inequality is due to
	\begin{equation}
	\frac{u^{1+\epsilon}}{B_m^\epsilon} = u \big[\phi\big(h(m)\big)\big]^{\frac{\epsilon}{1+\epsilon}}  \leq u \big[\phi(m)\big]^{\frac{\epsilon}{1+\epsilon}} = c_m. \label{ineq: bc}
	\end{equation}
	Recall $a>1$. We apply a peeling argument~\cite[Sec 2.2]{bubeck2010bandits} with geometric grid $ a^s \leq m < a^{s+1}$ over time interval $\until{T}$. Since $c_m$ is monotonically decreasing with $m$,
	\begin{align*}
	& \prob \bigg(\exists m \in \until{T}:  \frac{1}{m} \sum_{i=1}^{m} d_i^k \leq -x - \eta c_m \bigg)\\
	\leq &\sum_{s\geq 0} \prob\bigg(\exists m \in [a^s, a^{s+1}) : \sum_{i=1}^{m} d_i^k \leq -a^s \left(x + \eta c_{a^{s+1}} \right) \bigg).	
	\end{align*}
	Also notice that $B_m = B_{a^s} $ for all $m \in [a^s, a^{s+1})$. Then with properties in Lemma~\ref{prop: d_i}, we apply Lemma~\ref{max_inq_b} to get	
	\begin{align}
	&\sum_{s\geq 0} \prob\bigg(\exists m \in [a^s, a^{s+1}) : \sum_{i=1}^{m} d_i^k \leq -a^s \left(x + \eta c_{a^{s+1}} \right) \bigg) \nonumber\\
	\leq &\sum_{s\geq 0} \exp \left( -\frac{a^{s} \left(x+ \eta c_{a^{s+1}}\right)}{2 B_{a^{s}}}\psi \left (\frac{ 2 B_{a^{s}} \left(x + \eta c_{a^{s+1}}\right)}{a u^{1+\epsilon}B_{a^{s}}^{1-\epsilon}} \right) \right)\nonumber\\
	&\left(\text{since } \psi(x) \text{ is monotonically increasing}\right)\nonumber\\
	\leq & \sum_{s\geq 0} \exp \left( -\frac{a^{s} \left(x+ \eta c_{a^{s+1}}\right)}{2 B_{a^{s}}}\psi \left (\frac{ 2 \eta B_{a^{s}}^\epsilon   c_{a^{s+1}}}{a u^{1+\epsilon}} \right) \right) \nonumber\\
	&\text{\big(plug in $c_{a^{s+1}}$, $B_{a^{s}}$ and use $h(a^s)=a^{s+1}$)} \nonumber \\
	=& \sum_{s\geq 1} \exp \left( -a^{s}\left(\frac{ x}{B_{a^{s-1}}} + \eta \phi(a^s)\right) \frac{\psi \left ( 2\eta /a \right)}{2a} \right)  \nonumber \\
	&\left( {{\text{plug in } \phi(a^s) \text{ and use }} \eta\psi(2\eta/a)\geq 2a,  \ln_+ (y) \geq \ln (y)}  \right)\nonumber\\
	\leq &  \sum_{s\geq 1}  \exp \left( -a^{s} \frac{ x}{B_{a^{s-1}}} \frac{\psi \left ( 2\eta / a \right)}{2a} \right) \frac{K}{T} a^s. \label{sum:1}
	\end{align}
	Let $b = {x\psi \left ( 2\eta / a \right)}/ (2au)$. 
	{Since $B_{a^{s-1}} \leq ua^{\frac{s}{1+\epsilon}}$, we have}
	\begin{align*}
	\eqref{sum:1}\leq & \frac{K}{T} \sum_{s\geq 1} a^s \exp \left( -b a^{\frac{\epsilon s}{1+\epsilon}} \right) \\
	\leq & \frac{K}{T} \int_{1}^{+\infty} a^y \exp\big(- b a^{\frac{(y-1)\epsilon}{1+\epsilon}}\big) dy  \\
	= & \frac{K}{T} a \int_{0}^{+\infty} a^y \exp\big(-b a^{\frac{y\epsilon}{1+\epsilon}}\big) dy \\
	&\left(\text{where we set } z=b a^{\frac{y\epsilon}{1+\epsilon}} \right)\\
	= & \frac{K}{T} \frac{a}{\ln(a)}\frac{1+\epsilon}{\epsilon} {b^{-\frac{1+\epsilon}{\epsilon}}} \int_{b}^{+\infty} z^{\frac{1+\epsilon}{\epsilon}-1} \exp\big(- z  \big) dz \\
	\leq & \frac{K}{T} \frac{a}{\ln(a)} \Gamma\left(\frac{1}{\epsilon}+2\right)b^{-\frac{1+\epsilon}{\epsilon}},
	\end{align*}
	which concludes the proof. 
\end{proof}

The following is a straightforward corollary of Lemma~\ref{lemma: suff_sample}.
\begin{corollary}\label{corollary: suff_sample}
	For any arm $k\in \until{K}$ and any $t \in \left\{K+1,\ldots,T\right\}$ and $x > 0$, if $\eta\psi(2\eta/a) \geq 2a$, the probability of event $\big\{g^k_{n_k(t)}-2(1+\eta)c_{n_k(t)}\geq  \mu_k + x\}$ shares the same bound in Lemma~\ref{lemma: suff_sample}.
\end{corollary}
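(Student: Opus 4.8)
The plan is to exploit the symmetry of the maximal Bennett's inequality in Lemma~\ref{max_inq_b}, so that the entire argument mirrors the proof of Lemma~\ref{lemma: suff_sample} with the two tail bounds interchanged. First I would record the elementary identity
\[
g^k_{n_k(t)} - 2(1+\eta)c_{n_k(t)} = \hat \mu^k_{n_k(t)} - (1+\eta)c_{n_k(t)},
\]
so that the event of interest $\big\{\hat \mu^k_{n_k(t)} - (1+\eta)c_{n_k(t)} \geq \mu_k + x\big\}$ is exactly the reflection of the underestimation event $\big\{\hat \mu^k_{n_k(t)} + (1+\eta)c_{n_k(t)} \leq \mu_k - x\big\}$ treated in Lemma~\ref{lemma: suff_sample}. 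In other words, overestimation of $\mu_k$ by the debiased index plays the role that underestimation by the UCB index played before.

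Next I would reduce this overestimation event to an upper-tail deviation of the centered sum $\frac{1}{m}\sum_{i=1}^m d_i^k$. Writing the bias decomposition of Lemma~\ref{lemma: estimator_error} as $\hat \mu_m = \mu + \frac{1}{m}\sum_{i=1}^m d_i + \beta_m$ with $\abs{\beta_m} \leq u^{1+\epsilon}/B_m^\epsilon$, and invoking inequality~\eqref{ineq: bc}, which guarantees $u^{1+\epsilon}/B_m^\epsilon \leq c_m$, the event $\{\hat \mu_m - (1+\eta)c_m \geq \mu + x\}$ is contained in $\{\frac{1}{m}\sum_{i=1}^m d_i \geq x + \eta c_m\}$. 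This is the precise counterpart of the containment $\{g_m \leq \mu - x\} \subseteq \{\frac{1}{m}\sum_{i=1}^m d_i \leq -x - \eta c_m\}$ used in the lemma, with every inequality direction reversed and the bias again absorbed by the $\eta c_m$ slack.

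Finally I would apply the upper-tail half of Lemma~\ref{max_inq_b} in place of the lower-tail half, together with the same peeling decomposition over the geometric grid $a^s \leq m < a^{s+1}$ and the moment control of Lemma~\ref{prop: d_i}. Since both tails in Lemma~\ref{max_inq_b} carry an identical exponential bound, every subsequent manipulation---monotonicity of $\psi$, the substitution $b = x\psi(2\eta/a)/(2au)$, and the integral estimate producing $\Gamma(\tfrac{1}{\epsilon}+2)$---is word-for-word the computation already performed for Lemma~\ref{lemma: suff_sample}, so the same bound results. The only point that genuinely requires care, and hence the closest thing to an obstacle, is verifying that the sign reversal lands on the complementary tail inequality while leaving the bias cancellation intact; once that symmetry is confirmed the statement follows with no new estimates, which is exactly why it is phrased as a corollary rather than an independent lemma.
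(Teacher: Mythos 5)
Your proposal is correct and matches what the paper intends: the paper gives no written proof, calling the corollary a straightforward consequence of Lemma~\ref{lemma: suff_sample}, and the implied argument is exactly your symmetry — rewriting the event as $\hat \mu^k_{n_k(t)} - (1+\eta)c_{n_k(t)} \geq \mu_k + x$, absorbing the bias via~\eqref{ineq: bc} into the $\eta c_m$ slack, and invoking the upper-tail half of the maximal Bennett inequality in Lemma~\ref{max_inq_b}, whose bound is identical to the lower-tail half, so the peeling and integral computations carry over verbatim. No gaps.
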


\subsection{Distribution-free Regret Bound}
The distribution-free upper bound for Robust MOSS, which is the main result for the paper, is presented in this section. We show that the algorithm achieves order optimal worst-case regret. 

\begin{theorem}\label{theorem:minimax bound}
	For the heavy-tailed stochastic MAB problem with $K$ arms and time horizon $T$, if $\eta$ and $a$ are selected such that $\eta\psi(2\eta /a) \geq 2a$, then {Robust MOSS} satisfies
	\[ \supscr{R_T}{worst} \leq C  u K^{\frac{\epsilon}{1+\epsilon}} (T/e)^{\frac{1}{1+\epsilon}} + 2uK,\]
	where $C = \Gamma\left(1/\epsilon + 2\right) \left[a/\left(6+3\eta\right)\right]^{\frac{1}{\epsilon}} \left[{3}/{\psi \left (6+ 3\eta \right)} \right]^{\frac{1+\epsilon}{\epsilon}} + \epsilon \Gamma\left({1}/{\epsilon}+2\right) \left(6+3\eta\right)^{-\frac{1}{\epsilon}}  \big[{6a}/{\psi (2 \eta/a )} \big]^{\frac{1+\epsilon}{\epsilon}} a/\ln(a) + \left(6+3\eta\right) \big [e+(1+\epsilon) e^{\frac{-\epsilon}{1+\epsilon}}\big]$.
\end{theorem}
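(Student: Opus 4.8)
The plan is to bound the worst-case regret by decomposing the expected number of pulls of each suboptimal arm according to how far its index falls below the true optimal mean. The standard MOSS-style strategy is to write the regret as $R_T = \sum_k \Delta_k \expt[n_k(T+1)]$ and then control, for each arm, the expected number of times it is selected. Selection of a suboptimal arm $k$ at time $t$ requires $g^k_{n_k(t)} \geq g^{k^*}_{n_{k^*}(t)}$, so the event splits into (a) the optimal arm's index underestimating $\mu^*$, and (b) arm $k$'s index overestimating while being pulled. Corollary~\ref{corollary: suff_sample} and Lemma~\ref{lemma: suff_sample} give the two matching tail bounds, each of the form $\frac{K}{T}\,\mathrm{const}\,(x/u)^{-(1+\epsilon)/\epsilon}$, which is exactly the polynomial-in-$1/x$ decay that replaces the exponential tails available in the bounded case.

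\textbf{Key steps in order.} First I would fix a threshold $\Delta_0$ (to be optimized, expected to scale like $u(K/T)^{\epsilon/(1+\epsilon)}$, mirroring the lower bound and the remark after Theorem~1) and split the arms into those with $\Delta_k \le \Delta_0$, whose total contribution is trivially bounded by $\Delta_0 T$, and those with $\Delta_k > \Delta_0$. For the latter, I would bound $\expt[n_k(T+1)]$ by integrating the tail probability that arm $k$ is still being played after having been sampled many times: using the standard identity $\expt[n_k] = \sum_t \prob(\varphi_t = k)$ and the fact that a pull forces either event (a) or event (b) above with $x$ on the order of $\Delta_k$ minus the confidence width. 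Plugging in the bounds from Lemma~\ref{lemma: suff_sample} and Corollary~\ref{corollary: suff_sample} yields, after summing the geometric-type series in the per-arm sample count, a contribution of the form $\mathrm{const}\cdot u^{(1+\epsilon)/\epsilon}/\Delta_k^{1/\epsilon}$ per arm. Summing over arms, using $\sum_k \Delta_k \cdot \Delta_k^{-1/\epsilon}$ and the constraint $\Delta_k > \Delta_0$, and finally choosing $\Delta_0$ to balance the $\Delta_0 T$ term against the tail term gives the claimed $u K^{\epsilon/(1+\epsilon)}(T/e)^{1/(1+\epsilon)}$ scaling. The additive $2uK$ term absorbs the initialization cost of pulling each arm once, each contributing at most $2u$ to the regret.

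\textbf{The hard part} will be the careful constant-tracking needed to land on the precise expression for $C$, rather than merely the order. The two pieces of $C$ with $\Gamma(1/\epsilon+2)$ factors come directly from the two tail bounds (the optimal-arm underestimate via Lemma~\ref{lemma: suff_sample} and the suboptimal-arm overestimate via Corollary~\ref{corollary: suff_sample}), and the specific constants $6+3\eta$ and $\psi(6+3\eta)$ strongly suggest a particular choice of the separation point in the index comparison, namely splitting the confidence width so that each side carries a factor of $3(1+\eta)c$ or similar. I would need to identify exactly how the $(1+\eta)c_{n_k(t)}$ term is partitioned between the two bad events and at which sample count the peeling/integration is truncated, since those choices determine whether the $a/\ln(a)$ factor and the $(T/e)^{1/(1+\epsilon)}$ (rather than plain $T^{1/(1+\epsilon)}$) appear. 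The $e$ in $(T/e)$ almost certainly traces back to the $\ln_+$ saturating at $1$ when $T/(Kn)\le e$, so I would watch the regime where the logarithm is clipped and verify it produces the $[e + (1+\epsilon)e^{-\epsilon/(1+\epsilon)}]$ boundary term. These are routine but delicate optimizations of the free parameter $\Delta_0$ and the partition of the confidence radius; getting the stated closed-form $C$ is bookkeeping rather than conceptual difficulty.
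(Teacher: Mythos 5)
There is a genuine gap in your plan: the decomposition $R_T = \sum_k \Delta_k \expt[n_k(T+1)]$ with a per-arm event split---``the optimal arm's index underestimates $\mu^*$ by $x$ on the order of $\Delta_k$'' versus ``arm $k$'s index overestimates''---cannot deliver the stated $K^{\frac{\epsilon}{1+\epsilon}}$ dependence. For the underestimation event there is no per-arm sample count to sum over, so you must sum its probability over time steps: Lemma~\ref{lemma: suff_sample} gives $\frac{K}{T}\,\mathrm{const}\,(\Delta_k/u)^{-\frac{1+\epsilon}{\epsilon}}$ per step, hence $K\,\mathrm{const}\,(\Delta_k/u)^{-\frac{1+\epsilon}{\epsilon}}$ over the horizon, and a regret contribution of order $uK(u/\Delta_k)^{1/\epsilon}$ \emph{per suboptimal arm}---a factor of $K$ that your sketch silently drops (your claimed per-arm contribution $\mathrm{const}\cdot u^{\frac{1+\epsilon}{\epsilon}}/\Delta_k^{1/\epsilon}$ omits it). Summing over up to $K$ bad arms and balancing against the $\Delta_0 T$ term then forces $\Delta_0 \sim u(K^2/T)^{\frac{\epsilon}{1+\epsilon}}$ and a worst-case bound of order $uK^{\frac{2\epsilon}{1+\epsilon}}T^{\frac{1}{1+\epsilon}}$, suboptimal in $K$. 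This per-arm route is exactly what the paper uses for the \emph{distribution-dependent} bound (Theorem~\ref{theorem:distribution-dependent bound}, where the extra per-arm $C_2 K$ term is visible and harmless there), not for the minimax bound.

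The missing idea is the MOSS decoupling in the paper's Steps 1--2. On the event $\{g^*_{n^*(t)} \le \mu^* - \Delta_{\varphi_t}/3\}$ one bounds the instantaneous regret itself by $3\mu^* - 3g^*_{n^*(t)} - \delta$, a random variable that no longer references which suboptimal arm was pulled; its expectation is computed by integrating over the deviation magnitude, $\expt[Y_t] \le \int_{\delta}^{+\infty} \prob\big(\mu^* - g^*_{n^*(t)} > x/3\big)\,dx \le \frac{K C_1}{T}\delta^{-1/\epsilon}$, so the total underestimation cost over $T$ steps is $C_1 K \delta^{-1/\epsilon}$---one factor of $K$ paid once globally, not once per arm. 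The overestimation side is then handled per arm, but by summing indicators over the sample count $m$ (not over time $t$) and applying the maximal Bennett inequality (Lemma~\ref{max_inq_b}) directly for $m \ge l_k$ where $c_m \le \Delta_k/(6+3\eta)$; note the paper does \emph{not} use Corollary~\ref{corollary: suff_sample} here, precisely because that corollary's per-time-step $K/T$ bound would again cost a factor of $K$ when summed over $t$ (the corollary is used only in Theorem~\ref{theorem:distribution-dependent bound}). Your peripheral observations are right---the $2uK$ term is the initialization cost with $\Delta_k \le 2u$, and the $e$ in $(T/e)^{\frac{1}{1+\epsilon}}$ does trace to the choice $\delta = (6+3\eta)(eK/T)^{\frac{\epsilon}{1+\epsilon}}$ interacting with $\ln_+$---but without the integrated-tail decoupling the argument as sketched cannot reach the stated bound.
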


\begin{remark}
	Parameter $a$ and $\eta$ as inputs to Robust MOSS can be selected by minimizing the leading constant $C$ in the upper bound on the regret in Theorem~\ref{theorem:minimax bound}. We have found that selecting $a$ slightly larger than $1$ and selecting smallest $\eta$ that satisfies $\eta\psi(2\eta /a) \geq 2a$ yields good performance.
\end{remark}

\begin{proof}
	Since both the UCB and the regret scales with $u$ defined in Assumption~\ref{ass1}, to simplify the expressions, we assume $u=1$.  Also notice that Assumption~\ref{ass1} indicates $\abs{\mu_k}\leq u$, so $\Delta_k\leq 2$ for any $k \in \until{K}$. In the following, any terms with superscript or subscript ``$*$" and ``$k$" are with respect to the best and the $k$-th arm, respectively. The proof is divided into $4$ steps.  
	
	\noindent \textbf{Step 1:} We follow a decoupling technique inspired by the proof of regret upper bound in MOSS~\cite{MOSS}. Take the set of $\delta$-bad arms as $\mathcal{B}_\delta$ as
	\begin{equation} \label{def: badarm}
	\mathcal{B}_{\delta} := \setdef{k \in \until{K}}{\Delta_k > \delta},
	\end{equation}
	where we assign $\delta =  \left(6+3\eta\right)\left(e K/T\right)^{\frac{\epsilon}{1+\epsilon}}$. Thus,
	\begin{align}
	R_T &\leq T \delta + \sum_{t=1}^{K} \Delta_k +\expt \Bigg[\sum_{t=K+1}^{T} \indicator {\{\varphi_t \in \mathcal{B}_\delta \}} \left(\Delta_{\varphi_t} - \delta\right) \Bigg] \nonumber \\
	&\leq \! T \delta + 2K + \expt \Bigg[\sum_{t=K+1}^{T} \indicator {\{\varphi_t \in \mathcal{B}_\delta \}} \left(\Delta_{\varphi_t} - \delta\right) \!\Bigg]. \label{sum: regret}
	\end{align}
	Furthermore, we make the following decomposition 
	\begin{align}
	& \sum_{t=K+1}^T \mathbf{1}{\{\varphi_t \in \mathcal{B}_{\delta} \}} \left(\Delta_{\varphi_t}-\delta\right) \nonumber \\
	{=} & \sum_{t = K+1}^T  \indicator {\left\{\varphi_t \in \mathcal{B}_{\delta},  g_{n^* (t)}^* \leq \mu^* - \frac{\Delta_{\varphi_t}}{3} \right \}} \left(\Delta_{\varphi_t}-\delta\right) \label{error: under estimate}\\
	& +\sum_{t=K+1}^T  \indicator{\left\{\varphi_t \in \mathcal{B}_{\delta}, g_{n^* (t)}^* > \mu^* - \frac{\Delta_{\varphi_t}}{3} \right \}} \left(\Delta_{\varphi_t}-\delta\right).\nonumber
	\end{align}
	Notice that~\eqref{error: under estimate} describes regret from underestimating optimal arm $*$. For the second summand, since $g^{\varphi_t}_{ n_{\varphi_t} (t)} \geq g_{n^* (t)}^*$,
	\begin{align}
	&\sum_{t = K+1}^T \indicator {\left\{\varphi_t \in \mathcal{B}_{\delta},  g_{n^* (t)}^* > \mu^* - \frac{\Delta_{\varphi_t}}{3} \right \}} \left(\Delta_{\varphi_t}-\delta\right) \nonumber \\
	\leq & \sum_{t = K+1}^T \indicator {\left\{\varphi_t \in \mathcal{B}_{\delta}, g^{\varphi_t}_{n_{\varphi_t}(t)} > \mu_{\varphi_t} + \frac{2\Delta_{\varphi_t}}{3} \right \}} \Delta_{\varphi_t} \nonumber \\
	= & \sum_{k \in \mathcal{B}_\delta} \sum_{t = K+1}^T  \mathbf{1}{\left\{\varphi_t=k, g^k_{n_k(t)} > \mu_k + \frac{2\Delta_{k}}{3} \right\}} \Delta_k, \label{error: over estimate}
	\end{align}
	which characterizes the regret caused by overestimating $\delta$-bad arms.
	
	\noindent \textbf{Step 2:}	
	In this step, we bound the expectation of~\eqref{error: under estimate}. When event $\left\{\varphi_t \in \mathcal{B}_{\delta}, g_{n^* (t)}^* \leq \mu^* -  \Delta_{\varphi_t}/3 \right\}$ happens, we know
	\[\Delta_{\varphi} \leq  3\mu^* - 3g^*_{n^*(t)} \text{ and }  g^*_{n^*(t)} < \mu^* - \frac{\delta}{3}. \]	
	Thus, we get
	\begin{align*}
	&\mathbf{1}{\left\{\varphi_t \in \mathcal{B}_{\delta}, g_{n^* (t)}^* \leq \mu^* - \frac{\Delta_{\varphi_t}}{3} \right \}} (\Delta_{\varphi_t}-\delta) \\
	\leq &\indicator {\left\{ g_{n^*(t)}^* < \mu^* - \frac{\delta}{3} \right \}} \times  \big(3\mu^* - 3 g_{n^*(t)}^* - \delta \big):=Y_t.
	\end{align*}
	Since $Y_t$ is a positive random variable, its expected value can be computed involving only its cumulative density function:
	\begin{align*}
	\expt  \left[Y_t\right] & = \int_{0}^{+\infty} \prob  \left( Y_t>x \right) dx \\
	& \leq \int_{0}^{+\infty} \prob \left( 3\mu^* - 3 g_{n^*(t)}^* -\delta > x \right) dx \\
	&= \int_{\delta}^{+\infty} \prob \Big( \mu^* - g_{n^*(t)}^* > \frac{x}{3} \Big) dx.
	\end{align*}
	Then we apply Lemma~\ref{lemma: suff_sample} at optimal arm $*$ to get
	\[\expt[Y_t] \leq \frac{K C_1}{T} \int_{\delta}^{+\infty} \frac{1}{\epsilon} x^{-\frac{1+\epsilon}{\epsilon}} dx = \frac{K C_1}{T\delta^{\frac{1}{\epsilon}}}, \]
	where $C_1 = \epsilon \Gamma\left({1}/{\epsilon}+2\right) \big[{6a}/{\psi  (2 \eta /a )} \big]^{\frac{1+\epsilon}{\epsilon}} a/\ln(a)$. We conclude this step by
	\[\expt[\eqref{error: under estimate}]\leq \sum_{t = K+1}^T Y_t \leq C_1 K\delta^{-\frac{1}{\epsilon}}. \]
	
	\noindent \textbf{Step 3:}
	In this step, we bound the expectation of~\eqref{error: over estimate}. For each arm $k \in \mathcal{B}_\delta$,
	\begin{align}
	& \sum_{t = K+1}^T \mathbf{1}{ \left\{\varphi_t=k, g_{n_{k}(t)}^k \geq \mu_k + \frac{2\Delta_{k}}{3} \right\}} \nonumber \\
	= & \sum_{t = K+1}^T \sum_{m=1}^{t-K} \indicator \left\{\varphi_t=k, n_{k}(t) = m\right \} \indicator \left\{ g_m^k \geq \mu_k + \frac{2\Delta_k}{3} \right\} \nonumber \\
	= & \sum_{m = 1}^{T-K}  \indicator \left\{ g_m^k \geq \mu_k + \frac{2\Delta_k}{3} \right\} \! \sum_{t=m+K}^{T} \!\!\indicator \left\{\varphi_t=k, n_{k}(t) = m\right \} \nonumber \\
	\leq & \sum_{m = 1}^T  \indicator \left\{ g_m^k \geq \mu_k + \frac{2\Delta_k}{3} \right\} \nonumber \\
	\leq & \sum_{m=1}^{T} \indicator \Bigg\{ \frac{1}{m} \sum_{i=1}^{m} d_i^k \geq \frac{2\Delta_k}{3} - (2+\eta) c_m \Bigg\}, \label{ineq: 6}
	\end{align}	
	where in the last inequality we apply Lemma~\ref{lemma: estimator_error} and use the fact that ${u^{1+\epsilon}}/{B_m^\epsilon} \leq  c_m$ in~\eqref{ineq: bc}. 
	We set 
	\[l_k = \ceil{\left(\frac{6+3\eta}{\Delta_k}\right)^{\frac{1+\epsilon}{\epsilon}} \ln \Bigg (\frac{T}{K} \left(\frac{\Delta_k}{6+3\eta}\right)^{\frac{1+\epsilon}{\epsilon}} \Bigg )}.\]
	With $\Delta_k \geq \delta$, we get $l_k$ is no less than
	\[\left(\frac{6+3\eta}{\Delta_k}\right)^{\frac{1+\epsilon}{\epsilon}} \ln \bigg (\frac{T}{K} \left(\frac{\delta}{6+3\eta}\right)^{\frac{1+\epsilon}{\epsilon}} \bigg )  =\left(\frac{6+3\eta}{\Delta_k}\right)^{\frac{1+\epsilon}{\epsilon}}. \]
	Furthermore, since $c_m$ is monotonically decreasing with $m$, for $m \geq  l_k$,
	\begin{equation}
	c_m \leq c_{l_k} \leq \Bigg[\frac{\ln_+ \big(\frac{T}{K} \big(\frac{\Delta_k}{6+3\eta}\big)^{\frac{1+\epsilon}{\epsilon}}\big)}{l_k}\Bigg]^{\frac{\epsilon}{1+\epsilon}} \leq \frac{\Delta_k}{6+3\eta}. \label{ineq: 8}
	\end{equation}
	With this result and $l_k \geq 1$, we continue from~\eqref{ineq: 6} to get	
	\begin{align}
	\expt[\eqref{ineq: 6}]\leq &l_k-1 + \sum_{m = l_k}^T \prob \Bigg\{ \frac{1}{m} \sum_{i=1}^{m} d_i^k \geq \frac{2\Delta_k}{3}- (2+\eta) c_m  \Bigg\} \nonumber \\
	\leq & l_k-1 + \sum_{m = l_k}^T \prob \Bigg\{ \frac{1}{m} \sum_{i=1}^{m} d_i^k \geq \frac{\Delta_k}{3} \Bigg\}.	\label{ineq: 7}
	\end{align}
	Therefore by using Lemma~\ref{max_inq_b} together with statement (ii) from Lemma~\ref{prop: d_i}, we get
	\begin{align}
	&\sum_{m = l_k}^T \prob \Bigg\{ \frac{1}{m} \sum_{i=1}^{m} d_i^k \geq \frac{\Delta_k}{3} \Bigg\} \nonumber \\
	\leq & \sum_{m = l_k}^T \exp \left(-\frac{m \Delta_k}{3 B_m} \psi\left(B_m^\epsilon \Delta_k \right) \right) \nonumber\\
	\leq & \sum_{m = l_k}^T \exp \left(-\frac{m \Delta_k}{3 B_m} \psi\left(6+3\eta \right) \right), \label{newadd}
	\end{align}
	where the last step is due to that $\psi(x)$ is monotonically increasing and $B_m^\epsilon \Delta_k \geq (6+3\eta) B_m^\epsilon c_m  \geq 6+3\eta$ from~\eqref{ineq: 8} and~\eqref{ineq: bc}. Since $B_m =  \phi \big(h(m)\big)^{-\frac{1}{1+\epsilon}} \leq \phi(am)^{-\frac{1}{1+\epsilon}}\leq (am)^{\frac{1}{1+\epsilon}}$, we have
	\begin{align*}
	\eqref{newadd} &\leq \sum_{m = 1}^T \exp \left(-m^{\frac{\epsilon}{1+\epsilon}} a^{-\frac{1}{1+\epsilon}} \psi\left(6+3\eta \right) \frac{\Delta_k }{3} \right). \\
	&\leq  \int_{0}^{+\infty} \exp \left(-\beta y^{\frac{\epsilon}{1+\epsilon}}\right) dy \\	
	&\big(\text{where we set } \beta = a^{-\frac{1}{1+\epsilon}} \psi \left ( 6+3\eta \right) \Delta_k /3\big)\\	
	&=  \frac{1+\epsilon}{\epsilon} \beta^{-\frac{1+\epsilon}{\epsilon}} \int_{0}^{+\infty} z^{\frac{1+\epsilon}{\epsilon}-1} \exp \left(-z\right) dy \\
	&\big(\text{where }z = \beta y^{\frac{\epsilon}{1+\epsilon}}\big)\\
	&=  \Gamma\left(\frac{1}{\epsilon}+2\right)  \beta^{-\frac{1+\epsilon}{\epsilon}}.
	\end{align*}
	Plugging it into~\eqref{ineq: 7},
	\begin{align*}
	\expt[\eqref{ineq: 6}] & \leq C_2 \Delta_k^{-\frac{1+\epsilon}{\epsilon}} + C_3\Delta_k^{-\frac{1+\epsilon}{\epsilon}}  \ln \Big (\frac{T}{K C_3} \Delta_k^{\frac{1+\epsilon}{\epsilon}} \Big ),
	\end{align*}
	where $C_2 = \Gamma\left({1}/{\epsilon}+2\right) a^{\frac{1}{\epsilon}} \left[{3}/{\psi \left (6+ 3\eta \right)} \right]^{\frac{1+\epsilon}{\epsilon}} $ and $C_3 = \left(6+3\eta\right)^{\frac{1+\epsilon}{\epsilon}} $.
	Put it together with $\Delta_k \geq \delta$ for all $k \in \mathcal{B}_\delta$,
	\begin{align*}
	\expt[\eqref{error: over estimate}] &\leq \sum_{k \in \mathcal{B}_\delta} C_2 \Delta_k^{-\frac{1}{\epsilon}} + C_3\Delta_k^{-\frac{1}{\epsilon}} \ln \left (\frac{T}{K C_3} \Delta_k^{\frac{1+\epsilon}{\epsilon}} \right )\\
	& \leq C_2 K \delta^{-\frac{1}{\epsilon}} + (1+\epsilon) e^{\frac{-\epsilon}{1+\epsilon}} C_3 K \delta^{-\frac{1}{\epsilon}},
	\end{align*}
	where we use the fact that $x^{-\frac{1}{\epsilon}} \ln \big (Tx^{\frac{1+\epsilon}{\epsilon}} /\left(K C_3\right) \big )$ takes its maximum at $x = \delta\exp(\epsilon^2/(1+\epsilon))$.
	
	\noindent \textbf{Step 4:} 
	Plugging the results in step $2$ and step $3$ into~\eqref{sum: regret},
	\[\supscr{R_T}{worst} \leq T\delta + \left[C_1 + C_2 + (1+\epsilon) e^{\frac{-\epsilon}{1+\epsilon}} C_3\right]K\delta^{-\frac{1}{\epsilon}}+2K.\]
	Straightforward calculation concludes the proof.
\end{proof}

\subsection{Distribution-dependent Regret Upper Bound}
We now show that robust MOSS also preserves a logarithm upper bound on the {distribution-dependent} regret.
\begin{theorem}\label{theorem:distribution-dependent bound}
	For the heavy-tailed stochastic MAB problem with $K$ arms and time horizon $T$, if $\eta\psi(2\eta/a) \geq 2a$, the regret $R_T$ for {Robust MOSS} is no greater than
	\[ \sum_{k: \Delta_k > 0 } \Big(\frac{u^{1+\epsilon}}{\Delta_k}\Big)^{\frac{1}{\epsilon}} \left[C_1\ln \bigg (\frac{T}{KC_1} \Big(\frac{\Delta_k}{u}\Big)^{\frac{1+\epsilon}{\epsilon}} \bigg ) + C_2 K \right] + \Delta_k. \]
	where  $C_1 = \left(4+4\eta\right)^{\frac{1+\epsilon}{\epsilon}}$ and $C_2= \max \Big(e C_1, 2 \Gamma({1/\epsilon+2})  \left({8a}/{\psi  ( 2\eta/a )}\right)^{\frac{1+\epsilon}{\epsilon}} {a}/{\ln(a)}\Big) $.
\end{theorem}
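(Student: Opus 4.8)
The plan is to bound the expected number of pulls $\expt[n_k(T+1)]$ of each suboptimal arm $k$ and then assemble $R_T = \sum_{k:\Delta_k>0}\Delta_k\expt[n_k(T+1)]$. As in the proof of Theorem~\ref{theorem:minimax bound}, I would first invoke the scaling argument to set $u=1$, and write $n_k(T+1) = 1 + \sum_{t=K+1}^T\indicator\{\varphi_t=k\}$, where the leading $1$ accounts for the initialization round and ultimately produces the additive $\Delta_k$ in the stated bound. The heart of the argument is a decoupling of the pull event: since selecting $k$ at time $t$ forces $g^k_{n_k(t)} \ge g^*_{n^*(t)}$ and since $\mu^* - \Delta_k/4 = \mu_k + 3\Delta_k/4$, I would bound $\indicator\{\varphi_t=k\}$ by the sum of the \emph{underestimation} indicator $\indicator\{g^*_{n^*(t)} \le \mu^* - \Delta_k/4\}$ and the \emph{overestimation} indicator $\indicator\{\varphi_t=k,\,g^k_{n_k(t)} \ge \mu_k + 3\Delta_k/4\}$. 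This mirrors Step~1--Step~3 of Theorem~\ref{theorem:minimax bound}, but with the gap split at $\Delta_k/4$ instead of $\Delta_k/3$; this is exactly the choice that produces the constant $8a$ appearing in $C_2$.

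For the underestimation term I would apply Lemma~\ref{lemma: suff_sample} at the optimal arm with $x=\Delta_k/4$, so that each of the at most $T$ summands is at most $\tfrac{K}{T}\,\Gamma(1/\epsilon+2)\,\tfrac{a}{\ln a}\,(8a/\psi(2\eta/a))^{(1+\epsilon)/\epsilon}\Delta_k^{-(1+\epsilon)/\epsilon}$; summing cancels the $T$ and leaves a single $K\Delta_k^{-(1+\epsilon)/\epsilon}$ contribution. For the overestimation term I would use the standard counting identity to pass from the time sum to the sample index, $\sum_{t=K+1}^T\indicator\{\varphi_t=k,\,g^k_{n_k(t)}\ge\mu_k+3\Delta_k/4\}\le\sum_{m=1}^T\indicator\{g^k_m\ge\mu_k+3\Delta_k/4\}$, and peel off the first $l_k-1$ indices. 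Here $l_k$ is the smallest pull count guaranteeing $c_{l_k}\le\Delta_k/(4+4\eta)$; using $\ln_+\ge\ln$ and the monotonicity of $\phi$ together with inequality~\eqref{ineq: bc}, this can be taken as $\lceil (4+4\eta)^{(1+\epsilon)/\epsilon}\Delta_k^{-(1+\epsilon)/\epsilon}\ln(\tfrac{T}{KC_1}\Delta_k^{(1+\epsilon)/\epsilon})\rceil$, which after multiplying by $\Delta_k$ and using $\Delta_k\cdot\Delta_k^{-(1+\epsilon)/\epsilon}=\Delta_k^{-1/\epsilon}$ is precisely the $C_1\ln(\cdot)$ main term.

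For the tail indices $m\ge l_k$ the slack satisfies $2(1+\eta)c_m\le\Delta_k/2$, so the overestimation event implies $g^k_m - 2(1+\eta)c_m \ge \mu_k + \Delta_k/4$; applying Corollary~\ref{corollary: suff_sample} with $x=\Delta_k/4$ yields exactly the same per-term bound as the underestimation analysis, contributing a \emph{second} copy of the $K\Delta_k^{-(1+\epsilon)/\epsilon}$ term, which is the origin of the factor $2$ in $C_2$. (Alternatively this tail can be handled by the maximal Bennett inequality Lemma~\ref{max_inq_b} with Lemma~\ref{prop: d_i} as in Step~3, and the resulting $K$-free constant absorbed into $C_2 K$ using $K\ge1$.) Collecting the three pieces, multiplying by $\Delta_k$, using $\lceil x\rceil\le x+1$ to recover the $+\Delta_k$ term, and reinstating $u$ by homogeneity then gives the claim.

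The main obstacle is the constant bookkeeping around $l_k$, and specifically making the logarithmic argument come out as exactly $\ln\big(\tfrac{T}{KC_1}(\Delta_k/u)^{(1+\epsilon)/\epsilon}\big)$. The delicate regime is when this logarithm is small or negative: there the natural $l_k$ no longer achieves $c_{l_k}\le\Delta_k/(4+4\eta)$ through the $\ln$ branch of $\ln_+$, and one must instead enforce the floor coming from the $\ln_+\ge1$ branch, which introduces an extra $eC_1\Delta_k^{-(1+\epsilon)/\epsilon}$ term in the pull count; this is exactly what forces the $\max(eC_1,\cdot)$ in $C_2$. Checking that $c_m$ is dominated for \emph{all} $m\ge l_k$, via the interplay of $h(m)$, $B_m$, $\phi$ and~\eqref{ineq: bc}, is the one genuinely non-mechanical step.
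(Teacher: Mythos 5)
Your proposal is correct and follows essentially the same route as the paper's proof: the same per-arm pull-count bound with the gap split at $\Delta_k/4$, the same choice of $l_k$ guaranteeing $c_m \le \Delta_k/(4+4\eta)$, Lemma~\ref{lemma: suff_sample} for underestimation of the optimal arm and Corollary~\ref{corollary: suff_sample} for overestimation of arm $k$ (yielding the factor $2$ and the $8a/\psi(2\eta/a)$ constant in $C_2$). The only cosmetic differences are that the paper phrases the decoupling as three events, with $(1+\eta)c_{n_k(t)} > \Delta_k/4$ shown false once $n_k(t)\ge l_k$, where you fold this into a two-event split, and that the paper obtains the $eC_1$ branch of the $\max$ in $C_2$ by trivially bounding $T\Delta_k$ for arms with $\Delta_k \le \delta = (4+4\eta)(eK/T)^{\epsilon/(1+\epsilon)}$ rather than via your $\ln_+\ge 1$ floor on $l_k$; these are equivalent in substance.
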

\begin{proof}
	Let $\delta = \left(4+4\eta\right)\left(e K/T\right)^{\frac{\epsilon}{1+\epsilon}}$ and define $\mathcal{B}_\delta $ the same as~\eqref{def: badarm}. Since $\Delta_k \leq \delta$ for all $k \notin \mathcal{B}_\delta$, the regret satisfies
	\begin{align}		
	R_T &\leq \sum_{k \notin \mathcal{B}_\delta} T \Delta_k + \sum_{t=1}^{T} \indicator {\{\varphi_t \in \mathcal{B}_\delta \}} \Delta_{\varphi_t} \nonumber \\
	&\leq \sum_{k \notin \mathcal{B}_\delta} \! e K\left(\frac{4+4\eta}{\Delta_k}\right)^{\frac{1+\epsilon}{\epsilon}} \!\! \Delta_k  + \sum_{k\in \mathcal{B}_\delta}\sum_{t=1}^T \indicator{\{\varphi_t = k \}} \Delta_{k}. \label{bound2: 1}
	\end{align}
	Pick arbitrary $l_k \in \integer_{>0}$, thus
	\begin{align*}
	\sum_{t=1}^T \indicator \{\varphi_t = k \} & \leq l_k + \!\!\!\sum_{t = K+1}^T \!\! \indicator\left\{ \varphi_t = k, n_k(t)\geq l_k \right \} \\
	&\leq l_k + \!\!\! \sum_{t=K+1}^T \!\! \indicator{\left\{g^k_{n_{k}(t)} \geq g_{n^* (t)}^*, n_k(t)\geq l_k\right \}}.
	\end{align*}
	Observe that $g^k_{n_{k}(t)} \geq g_{n^* (t)}^*$ implies at least one of the following is true
	\begin{align}
	&g_{n^* (t)}^* \leq \mu^* -{\Delta_k}/{4}, \label{event:1}\\
	&g^k_{n_{k}(t)} \geq \mu_{k} + {\Delta_{k}}/{4} + 2(1+\eta) c_{n_k(t)}, \label{event:2}\\
	&(1+\eta) c_{n_k(t)} > {\Delta_k}/{4}. \label{event:3}
	\end{align}
	We select
	\[l_k = \ceil{\left(\frac{4+4\eta}{\Delta_k}\right)^{\frac{1+\epsilon}{\epsilon}} \ln \Bigg(\frac{T}{K} \left(\frac{\Delta_k}{4+4\eta}\right)^{\frac{1+\epsilon}{\epsilon}} \Bigg )}.\]
	Similarly as~\eqref{ineq: 8}, $n_k(t) \geq l_k$ indicates $c_{n_k(t)} \leq \Delta_k/(4+4\eta)$, so~\eqref{event:3} is false. Then we apply Lemma~\ref{lemma: suff_sample} and Corollary~\ref{corollary: suff_sample},
	\begin{align*}
	&\prob {\left\{g^k_{n_{k}(t)} \geq g_{n^* (t)}^*, n_k(t)\geq l_k\right \}}\\
	\leq & \prob \left( \text{\eqref{event:1} or~\eqref{event:2} is true } \right) \leq \frac{C_2' K}{T }\Delta_k^{-\frac{1+\epsilon}{\epsilon}},
	\end{align*}
	where $C_2'= 2 \Gamma\left({1/\epsilon+2}\right)  \left({8a}/{\psi  (2\eta / a )}\right)^{\frac{1+\epsilon}{\epsilon}} {a}/{\ln(a)}$. Substituting it into~\eqref{bound2: 1}, $R_T$ is upper bounded by
	\begin{align*}
	& \sum_{k \notin \mathcal{B}_\delta} \! \frac{e C_1 K }{\Delta_k^{\frac{1}{\epsilon}}} + \! \sum_{k \in \mathcal{B}_\delta} \!  \left[\frac{C_1 }{\Delta_k^{\frac{1}{\epsilon}}}\ln \bigg (\frac{T}{K C_1} \Delta_k^{\frac{1+\epsilon}{\epsilon}} \bigg ) + \frac{C_2' K}{\Delta_k^{\frac{1}{\epsilon}}} + \Delta_k \right]. 
	\end{align*}
	Considering the scaling factor $u$, the proof can be concluded with easy computation.
\end{proof}

\section{Numerical Illustration}\label{sec:simulation}
{In this section, we compare Robust MOSS with MOSS and Robust UCB (with truncated empirical mean or Catoni's estimator)~\cite{bubeck2013bandits} in a $3$-armed heavy-tailed bandit setting.} The mean rewards are $\mu_1=-0.3$, $\mu_2=0$ and $\mu_3=0.3$ and sampling at each arm $k$ returns a random reward equals to $\mu_k$ added by sampling noise $\nu$, where $\abs{\nu}$ is a generalized Pareto random variable and the sign of $\nu$ has equal probability to be positive and negative. The PDF of reward at arm $k$ is
\[f_{k}(x) = \frac{1}{2\sigma}\left(1 + \frac{\xi \abs{x-\mu_k}}{\sigma}\right)^{-\frac{1}{\xi} - 1} \, \text{for } x \in (-\infty, +\infty),\]
where we select $\xi = 0.33$ and $\sigma=0.32$. Thus, for a random reward $X$ from any arm, we know $\expt [X^2] \leq 1$, which means $\epsilon=1$ and $u=1$. We select parameters $a=1.1$ and $\eta = 2.2$ for Robust MOSS so that condition $\eta\psi(2\eta/a) \geq 2a$ is met. 

Fig.\ref{fig.comparison} shows the mean cumulative regret together with quantiles of cumulative regret distribution as a function of time, which are computed using $200$ simulations of each policy. The simulation result shows that there is a chance MOSS loses stability in heavy-tailed MAB and suffers linear cumulative regret while other algorithms work consistently and maintain sub-linear cumulative regrets. {Robust MOSS slightly outperforms Robust UCB in this specific problem.}



\begin{figure}[ht!]
	\begin{center}
		\includegraphics[width=0.49\textwidth]{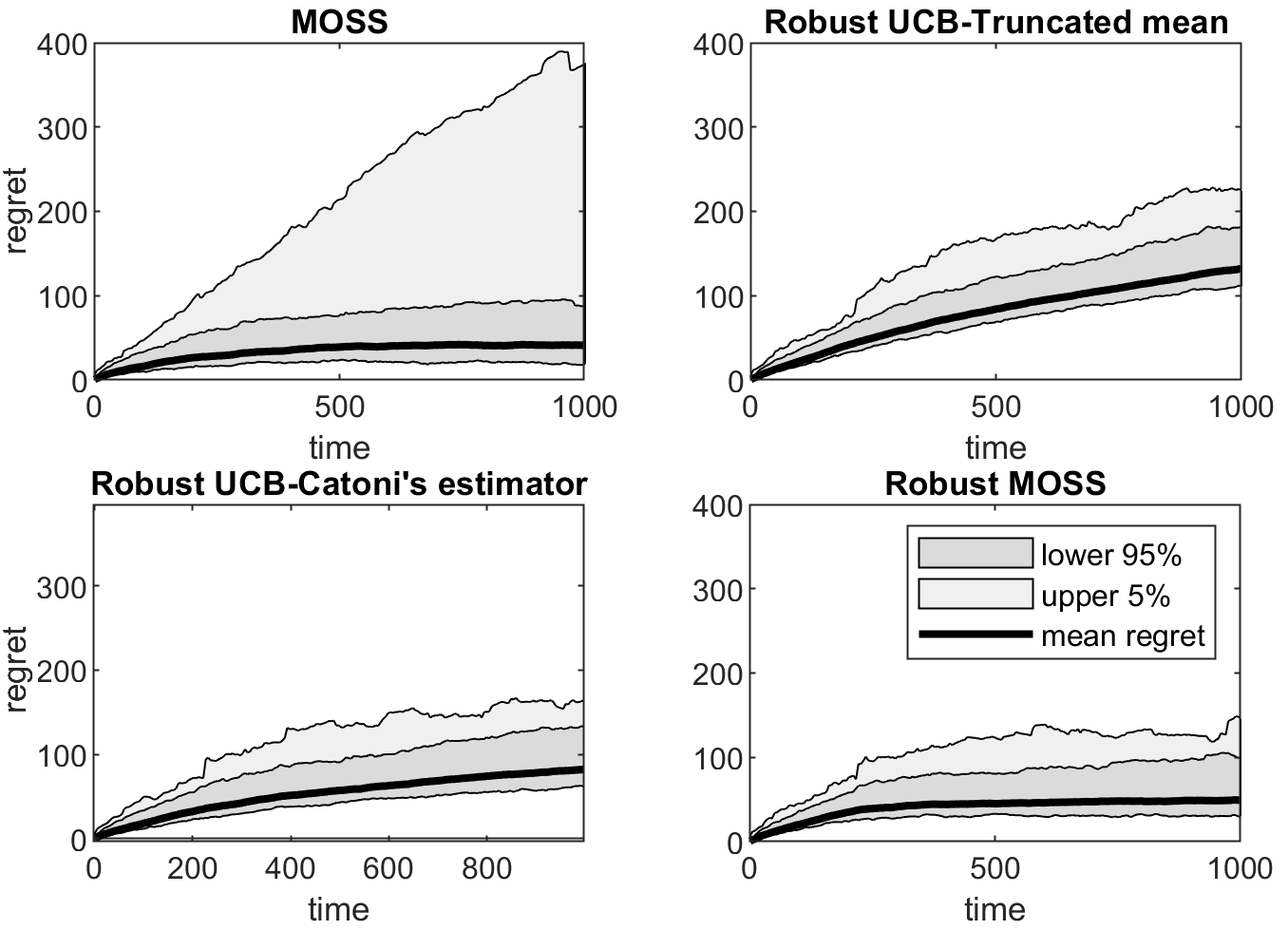}
		\vspace{-16pt}
		\caption{Comparison of $4$ algorithms in heavy-tailed MAB: On each graph, the bold curve is the mean regret while light shaded and dark shaded regions correspond respectively to upper $5\%$ and lower $95\%$ quantile cumulative regrets.}
		\vspace{-10pt}
		\label{fig.comparison}		
	\end{center}
\end{figure}

\section{Conclusions and Future Direction}\label{sec:conclusions}
We proposed the Robust MOSS algorithm for heavy-tailed bandit problem. We evaluated the algorithm by deriving upper bounds on the associated distribution-free and distribution-dependent regrets. Our analysis showed that Robust MOSS achieves order optimal performance in both scenarios.
The saturated mean estimator centers at zero which make the algorithm not translation invariant. Exploration of translation invariant robust mean estimator in this context remains an open problem. 

\bibliographystyle{IEEEtran}
\bibliography{IEEEabrv,bandits,surveillance,mybib}
	
\end{document}